\newcommand{\specialcell}[2][c]{%
  \begin{tabular}[#1]{@{}c@{}}#2\end{tabular}}
\newcommand{\latin}[1]{{\it #1}}
\newcounter{nbdrafts}
\newcommand{\checknbdrafts}{
\ifnum \thenbdrafts > 0
\@latex@warning@no@line{**********************************************************************}
\@latex@warning@no@line{* The document contains \thenbdrafts \space draft note(s)}
\@latex@warning@no@line{**********************************************************************}
\fi}
\icmltitlerunning{Knowledge Transfer with Jacobian Matching}
\newtheorem{prop}{Proposition}
\newtheorem*{claim}{Claim}
\newtheorem{prop_star}{Proposition}
\newtheorem*{corollary}{Corollary}
\newcommand{\X}{\mathbf{x}}
\newcommand{\Y}{\mathbf{y}}
\newcommand{\XI}{\boldsymbol{\xi}}
\begin{document}

\twocolumn[
\icmltitle{Knowledge Transfer with Jacobian Matching}



\icmlsetsymbol{equal}{*}

\begin{icmlauthorlist}
\icmlauthor{Suraj Srinivas}{id}
\icmlauthor{Fran\c{c}ois Fleuret}{id}
\end{icmlauthorlist}

\icmlaffiliation{id}{Idiap Research Institute \& EPFL, Switzerland}

\icmlcorrespondingauthor{Suraj Srinivas}{suraj.srinivas@idiap.ch}

\icmlkeywords{Machine Learning, ICML}

\vskip 0.3in
]



\printAffiliationsAndNotice{}  

\begin{abstract}
Classical distillation methods transfer representations from a ``teacher'' neural network to a ``student'' network by matching their output activations. Recent methods also match the Jacobians, or the gradient of output activations with the input. However, this involves making some ad hoc decisions, in particular, the choice of the loss function.

In this paper, we first establish an equivalence between Jacobian matching and distillation with input noise, from which we derive appropriate loss functions for Jacobian matching. We then rely on this analysis to apply Jacobian matching to transfer learning by establishing equivalence of a recent transfer learning procedure to distillation.

We then show experimentally on standard image datasets that Jacobian-based penalties improve distillation, robustness to noisy inputs, and transfer learning.
\end{abstract}

\section{Introduction}
Consider that we are given a neural network $\mathcal{A}$ trained on a particular dataset, and want to train another neural network $\mathcal{B}$ on a similar (or related) dataset. Is it possible to leverage $\mathcal{A}$ to train $\mathcal{B}$ more efficiently? We call this the problem of \latin{knowledge transfer}. Distillation~\citep{hinton2015distilling} is a form of knowledge transfer where $\mathcal{A}$ and $\mathcal{B}$ are trained on the same dataset, but have different architectures. Transfer Learning~\citep{pan2010survey} is another form of knowledge transfer where $\mathcal{A}$ and $\mathcal{B}$ are trained on different (but related) datasets. If the architectures are the same, we can in both cases simply copy the weights from $\mathcal{A}$ to $\mathcal{B}$.  The problem becomes more challenging when $\mathcal{A}$ and $\mathcal{B}$ have different architectures.

A perfect distillation method would enable us to transform one neural network architecture into another, while preserving generalization. This would allow us to easily explore the space of neural network architectures, which can be used for neural network architecture search, model compression, or creating diverse ensembles. A perfect transfer learning method, on the other hand, would use little data to train $\mathcal{B}$, optimally using the limited samples at it's disposal.

This paper deals with the problem of knowledge transfer based on matching the Jacobian of the network's output which, like the output itself, has a dimension independent of the network's architecture. This approach has also been recently explored for the case of distillation by~\citet{czarnecki2017sobolev}, who considered the general idea of matching Jacobians, and by~\citet{zagoruyko2016paying} who view Jacobians as attention maps. However, in both of these cases, it was not clear what loss function must be used to match Jacobians. It was also unclear how these methods relate to classical distillation approaches~\citep{ba2014deep, hinton2015distilling}.

Recently~\citet{li2016learning} proposed a distillation-like approach to perform transfer learning. While this approach works well in practice, it was not clear how this exactly relates to regular distillation. It is also not clear how this applies to the challenging setting of transfer learning where the architectures of both networks $\mathcal{A}$ and $\mathcal{B}$  can be arbitrary.

The overall contributions of our paper are: 
\begin{enumerate}
\item We show that matching Jacobians is a special case of classical distillation, where noise is added to the inputs. 
\item We show that a recent transfer learning method (LwF by \citealp{li2016learning}) can be viewed as distillation, which allows us to match Jacobians for this case.
\item We provide methods to match Jacobians of practical deep networks, where architecture of both networks are arbitrary.
\end{enumerate}

We experimentally validate these results by providing evidence that Jacobian matching helps both regular distillation and transfer learning, and that Jacobian-norm penalties learn models robust to noise.

\section{Related Work}
Several Jacobian-based regularizers have been proposed in recent times. Sobolev training~\citep{czarnecki2017sobolev}, showed that using higher order derivatives along with the targets can help in training with less data. This work is similar to ours. While we also make similar claims, we clarify the relationship of this method with regular distillation based on matching activations, and show how it can help. Specifically, we show how the loss function used for activation matching also decides the loss function we use for Jacobian matching. Similarly, \citet{wang2016analysis} use the Jacobian for distillation and show that it helps improve performance. \citet{zagoruyko2016paying} introduce the idea of matching attention maps. The Jacobian was also considered as one such attention map. This work also finds that combining both activation matching and Jacobian matching is helpful. 

Jacobian-norm regularizers were used in early works by~\citet{drucker1992improving}, where they looked at penalizing the Jacobian norm. The intuition was to make the model more robust to small changes in the input. We find that this conforms to our analysis as well.

Knowledge Distillation~\citep{hinton2015distilling} first showed that one can use softmax with temperature to perform knowledge transfer with neural nets.~\citet{ba2014deep} found that squared error between logits worked better than the softmax method, and they used this method to train shallow nets with equivalent performance to deep nets.~\citet{romero2014fitnets} and~\citet{zagoruyko2016paying} showed how to enhance distillation by matching intermediate features along with the outputs, but use different methods to do so.~\citet{sau2016deep} found that adding noise to logits helps during teacher-student training. We show that the use of the Jacobian can be interpreted as adding such noise to the inputs analytically.

\section{Jacobians of Neural Networks}

Let us consider the first order Taylor series expansion of a function $f: \mathbb{R}^D \rightarrow \mathbb{R}$ around a small neighborhood $\{\X + \Delta \X : \|\Delta \X \| \leq \epsilon \}$. It can be written as
\begin{align}
\label{defn}
f(\X + \Delta \X) &= f(\X) + \nabla_x f(\X)^T (\Delta \X) + \mathcal{O}(\epsilon^2) 
\end{align}
We can apply this linearization to neural nets. The source of non-linearity for neural nets lie in the elementwise non-linear activations (like ReLU, sigmoid) and pooling operators. \textit{It is easy to see that to linearize the entire neural network, one must linearize all such non-linearities in the network.}

\subsection{Special case: ReLU and MaxPool} For the ReLU nonlinearity, the Taylor approximation is locally exact and simple to compute, as the derivative $\frac{\mathrm{d}\sigma(z)}{\mathrm{d}z}$ is either 0 or 1 (except at $z = 0$, where it is undefined). A similar statement holds for max-pooling. Going back to the definition in Equation \ref{defn}, for piecewise linear nets there exist $\epsilon > 0$ such that the super-linear terms are exactly zero, \latin{i.e.}; $f(\X + \Delta \X) = f(\X) + \nabla_x f(\X)^T (\Delta \X)$, \latin{i.e.}; $\X$ and $\Delta \X$ lie on the same linear surface.

\subsection{Invariance to weight and architecture specification} One useful property of the Jacobian is that its size does not depend on the network architecture. For $k$ output classes, and input dimension $D$ , the Jacobian of a neural network is of dimension $D \times k$. This means that one can compare Jacobians of different architectures.

Another useful property is that for a given neural network architecture, different weight configurations can lead to the same Jacobian. One simple example of this is permutation symmetry of neurons in intermediate hidden layers. It is easy to see different permutations of neurons leave the Jacobian unchanged (as they have the same underlying function). In general, because of redundancy of neural network models and non-convexity of the loss surface, several different weight configurations can end up having similar Jacobians.  

Given these desirable properties, we consider using the Jacobian to perform knowledge transfer between neural networks of different architectures. Note that these properties hold trivially for output activations also. Thus it seems sensible that both these quantities must be used for knowledge transfer. However, the important practical question remains: how exactly should this be done?

\section{Distillation}

We consider the problem of improving distillation using Jacobians. This problem of distillation may be posed as follows: given a \textit{teacher} network $\mathcal{T}$ which is trained on a dataset $\mathcal{D}$, we wish to enhance the training of a student network $\mathcal{S}$ on $\mathcal{D}$ using ``hints'' from $\mathcal{T}$. Classically, such ``hints'' involve activations of the output layer or some intermediate layers. Recent works~\citep{czarnecki2017sobolev, zagoruyko2016paying} sought to match the Jacobians of $\mathcal{S}$ and $\mathcal{T}$. However, two aspects are not clear in these formalisms: (i) what penalty term must be used between Jacobians, and (ii) how this idea of matching Jacobians relates to simpler methods such as activation matching~\citep{ba2014deep, hinton2015distilling}. To resolve these issues, we make the following claim.

\begin{claim}
	Matching Jacobians of two networks is equivalent to matching soft targets with noise added to the inputs during training.
\end{claim}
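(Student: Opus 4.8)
The plan is to start from the noise-augmented distillation objective and Taylor-expand it using Equation \ref{defn}. Fix an input $\X$ and let $\XI$ be a small, zero-mean noise vector with isotropic covariance $\mathbb{E}[\XI\XI^T]=\sigma^2 I$ and $\|\XI\|\le\epsilon$. Write $f^{\mathcal{S}},f^{\mathcal{T}}:\mathbb{R}^D\to\mathbb{R}^k$ for the (soft) outputs of student and teacher and $J(\X)=\nabla_x f(\X)$ for the $D\times k$ Jacobian. With the squared-error matching loss of \citet{ba2014deep}, the per-sample distillation loss under input noise is $\mathbb{E}_{\XI}\,\|f^{\mathcal{S}}(\X+\XI)-f^{\mathcal{T}}(\X+\XI)\|^2$. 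Applying the first-order expansion coordinatewise to both networks gives $f^{\mathcal{S}}(\X+\XI)-f^{\mathcal{T}}(\X+\XI) = \big(f^{\mathcal{S}}(\X)-f^{\mathcal{T}}(\X)\big) + \big(J^{\mathcal{S}}(\X)-J^{\mathcal{T}}(\X)\big)^{T}\XI + \mathcal{O}(\epsilon^{2})$.

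Next I would expand the square and push the expectation inside. The cross term is linear in $\XI$ and vanishes since $\mathbb{E}[\XI]=0$; the quadratic term evaluates, using $\mathbb{E}[\XI\XI^{T}]=\sigma^{2}I$, to $\sigma^{2}\,\|J^{\mathcal{S}}(\X)-J^{\mathcal{T}}(\X)\|_{F}^{2}$. Hence the noisy loss equals $\|f^{\mathcal{S}}(\X)-f^{\mathcal{T}}(\X)\|^{2} + \sigma^{2}\,\|J^{\mathcal{S}}(\X)-J^{\mathcal{T}}(\X)\|_{F}^{2} + \mathcal{O}(\epsilon^{2})$, i.e. ordinary soft-target matching plus a Jacobian-matching penalty with weight $\sigma^{2}$. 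Averaging over $\X\in\mathcal{D}$ yields the claimed equivalence at the level of the training objective.

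I would then record two refinements that the same computation delivers. First, the matching loss propagates: repeating the expansion with a general loss $\ell$ in place of squared error, the order-$\sigma^{2}$ term becomes $\tfrac{\sigma^{2}}{2}\,\mathrm{tr}\big((J^{\mathcal{S}}-J^{\mathcal{T}})\,H_{\ell}\,(J^{\mathcal{S}}-J^{\mathcal{T}})^{T}\big)$, where $H_{\ell}$ is the Hessian of $\ell$ in the output arguments evaluated at $\X$; so the induced Jacobian penalty is a Mahalanobis norm dictated by $\ell$, which is the mechanism behind the paper's claim that the activation-matching loss determines the Jacobian-matching loss. Second, for piecewise-linear networks (ReLU, MaxPool) the discussion after Equation \ref{defn} shows the first-order expansion is locally exact, so once $\sigma$ is small enough that the noise support stays inside the shared linear region around $\X$, the $\mathcal{O}(\epsilon^{2})$ term is identically zero and the equivalence is exact rather than asymptotic.

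The main obstacle I anticipate is controlling this super-linear remainder: for generic smooth activations the equivalence holds only as $\sigma\to 0$, and one must argue that the discarded $\mathcal{O}(\epsilon^{2})$ terms are negligible \emph{relative to} the retained $\sigma^{2}$-order Jacobian term — which needs either the piecewise-linearity argument above or a bound on the networks' higher derivatives. A secondary subtlety is that non-zero-mean or anisotropic noise resurrects the cross term (a gradient-alignment term) and replaces $\sigma^{2}I$ by a general covariance in the quadratic term, so the clean "$\sigma^{2}\times$ Jacobian matching" statement should be flagged as relying on the isotropic zero-mean assumption.
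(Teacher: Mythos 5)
Your proposal is correct and follows essentially the same route as the paper's own proof of Proposition 1: a first-order Taylor expansion of both networks around $\X$, vanishing of the cross term by $\mathbb{E}[\XI]=0$, and evaluation of the quadratic term via the isotropic covariance $\mathbb{E}[\XI\XI^{T}]=\sigma^{2}I$ to obtain the $\sigma^{2}$-weighted Jacobian-difference penalty, with the same remarks on exactness for piecewise-linear networks and on the activation loss dictating the Jacobian loss. The only cosmetic difference is that the paper tracks the remainder as $\mathcal{O}(\sigma^{4})$ after taking the expectation (odd-order terms vanish), whereas you write it as $\mathcal{O}(\epsilon^{2})$ before expectation.
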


\begin{figure}
\centering
\includegraphics[width=8.25cm]{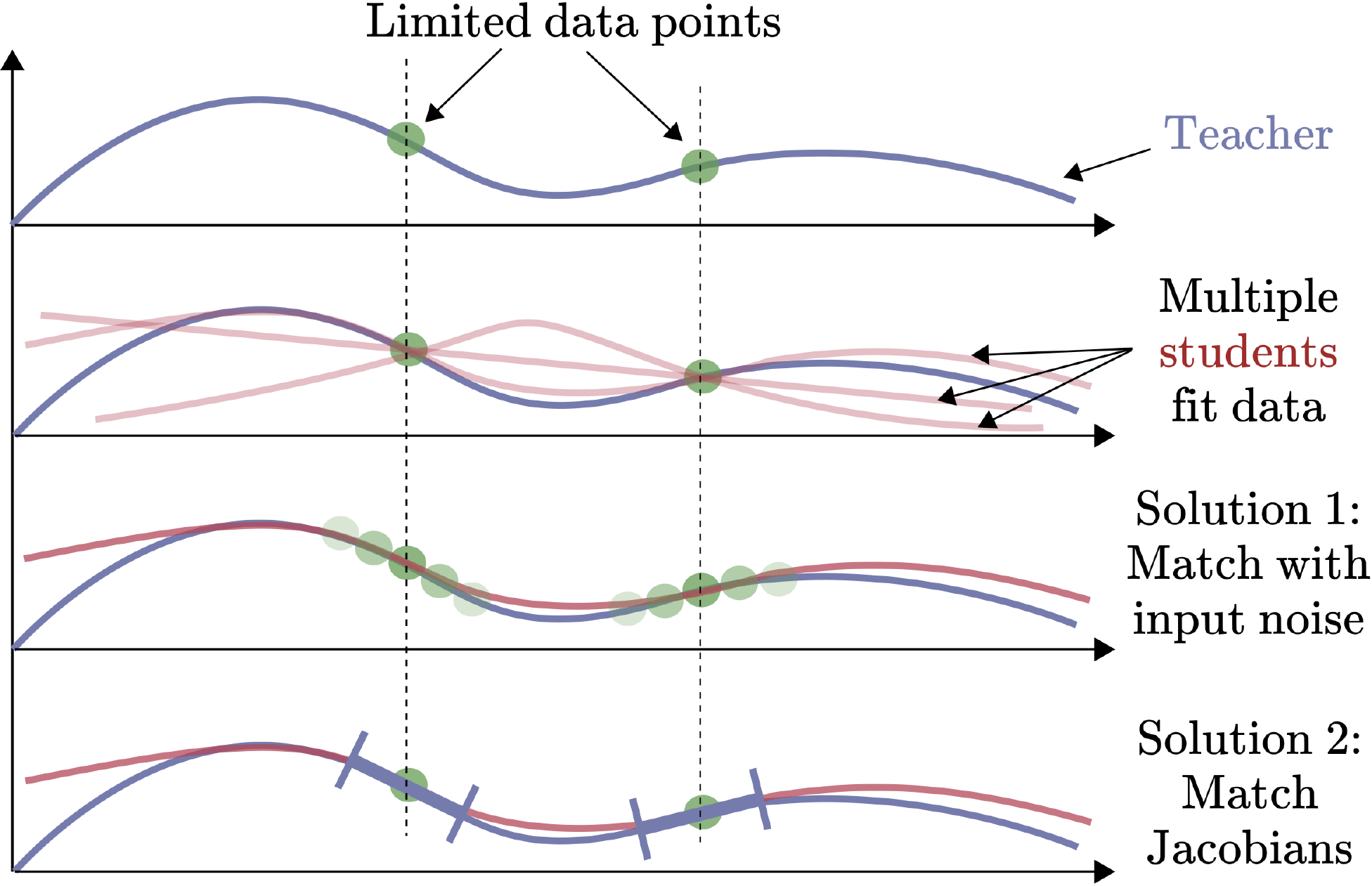}
\caption{Illustration of teacher-student learning in a simple 1D case. Here, x-axis is the input data, and y-axis denotes function outputs. Given a limited number of data points, there exist multiple student functions consistent with the data. How do we select the hypothesis closest to the teacher's? There are two equivalent solutions: either by augmenting the data set by adding noise to the inputs or by directly matching slopes (Jacobians) of the function at the data points.}
\label{fig:noise}
\end{figure}

More concretely, we make the following proposition.

\begin{prop}
\label{prop1}
	Consider the squared error cost function for matching soft targets of two neural networks with $k$-length targets ($ \in \mathbb{R}^k$), given by $\ell(\mathcal{T}(\X), \mathcal{S}(\X)) = \sum_{i=1}^{k}(\mathcal{T}^i(\X) - \mathcal{S}^i(\X))^2$, where $\X \in \mathbb{R}^D$ is an input data point. Let $\XI ~(\in \mathbb{R}^D)= \sigma~\mathbf{z}$ be a scaled version of a unit normal random variable $\mathbf{z}~ \in \mathbb{R}^D$ with scaling factor $\sigma \in \mathbb{R}$. Then the following is true.

\vspace*{-20pt}	
	\begin{align*}
	&\mathbb{E}_{\XI}\left[\sum_{i=1}^{k}\left(\mathcal{T}^i(\X + \XI) - \mathcal{S}^i(\X + \XI)\right)^2\right] \\ & = \sum_{i=1}^{k}\left(\mathcal{T}^i(\X) - \mathcal{S}^i(\X)\right)^2 + \sigma^2 \sum_{i=1}^{k} \|\nabla_x\mathcal{T}^i(\X) - \nabla_x\mathcal{S}^i(\X)\|_2^2 \\&+ \mathcal{O}(\sigma^4)
	\end{align*}

\end{prop}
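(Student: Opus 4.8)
The plan is to reduce the statement to a Taylor expansion of the per-class difference $g^i(\X) := \mathcal{T}^i(\X) - \mathcal{S}^i(\X)$ around $\X$, followed by an exact evaluation of the low-order moments of the Gaussian perturbation $\XI = \sigma\,\mathbf{z}$. First, by linearity of expectation,
\[
\mathbb{E}_{\XI}\!\left[\sum_{i=1}^{k}\big(\mathcal{T}^i(\X+\XI)-\mathcal{S}^i(\X+\XI)\big)^2\right] = \sum_{i=1}^{k}\mathbb{E}_{\XI}\big[g^i(\X+\XI)^2\big],
\]
so it suffices to analyze a single term $\mathbb{E}_{\XI}[g^i(\X+\XI)^2]$ and sum over $i$ at the end.

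Next I would expand $g^i(\X+\XI) = g^i(\X) + \nabla_x g^i(\X)^\top \XI + r^i(\X,\XI)$, where $r^i$ collects all second- and higher-order terms (for the ReLU/MaxPool networks discussed in the ``Special case'' subsection, $r^i\equiv 0$ on a neighborhood of $\X$, which is the clean setting). Squaring gives
\[
g^i(\X+\XI)^2 = g^i(\X)^2 + 2\,g^i(\X)\,\big(\nabla_x g^i(\X)^\top \XI\big) + \big(\nabla_x g^i(\X)^\top \XI\big)^2 + (\text{higher-order in }\XI).
\]
Taking expectations, I would use that $\mathbf{z}\sim\mathcal{N}(0,I_D)$ satisfies $\mathbb{E}[\mathbf{z}]=0$ and $\mathbb{E}[\mathbf{z}\mathbf{z}^\top]=I_D$: the linear term vanishes by symmetry, and $\mathbb{E}\big[(\nabla_x g^i(\X)^\top \XI)^2\big] = \sigma^2\,\nabla_x g^i(\X)^\top \mathbb{E}[\mathbf{z}\mathbf{z}^\top]\,\nabla_x g^i(\X) = \sigma^2\,\|\nabla_x g^i(\X)\|_2^2$. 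Re-substituting $g^i = \mathcal{T}^i - \mathcal{S}^i$ and summing over $i$ recovers exactly the first two terms of the claimed identity.

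The delicate step, and the main obstacle, is controlling the remainder. Carrying the expansion one order further produces a Hessian contribution $g^i(\X)\,\XI^\top H_i\,\XI$ with expectation $\sigma^2\,g^i(\X)\,\mathrm{tr}(H_i)$, which is nominally of order $\sigma^2$ rather than $\sigma^4$; I would handle this by noting it is \emph{exactly} absent for the piecewise-linear networks emphasized earlier (where $H_i\equiv 0$ locally), and is otherwise negligible in the regime of interest since $g^i(\X)\approx 0$ at the training points where the penalty is applied. The genuinely $\mathcal{O}(\sigma^4)$ terms then arise from the fourth-order Gaussian moments multiplying squared second-order terms — evaluated via the pairing formula for $\mathbb{E}[z_a z_b z_c z_d]$ — while all odd higher moments vanish by symmetry of $\mathbf{z}$. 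Making the remainder rigorous amounts to a uniform bound on these contributions (e.g. via a bound on the third derivatives of $\mathcal{T}^i$ and $\mathcal{S}^i$, or, in the piecewise-linear case, by accounting only for the exponentially small Gaussian-tail event that $\XI$ exits the local linear region). Everything else is routine bookkeeping with Gaussian moments.
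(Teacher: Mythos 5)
Your proof follows essentially the same route as the paper's: a first-order Taylor expansion of the difference $g^i = \mathcal{T}^i - \mathcal{S}^i$, the zero-mean property of $\XI$ to kill the cross term, and the identity covariance of $\mathbf{z}$ to evaluate $\mathbb{E}\big[(\nabla_x g^i(\X)^\top\XI)^2\big] = \sigma^2\|\nabla_x g^i(\X)\|_2^2$. Your treatment of the remainder is in fact more careful than the paper's, which silently absorbs the $\sigma^2\, g^i(\X)\,\mathrm{tr}(H_i)$ cross term into $\mathcal{O}(\sigma^4)$ --- a contribution that, as you correctly note, is only exactly absent for the piecewise-linear networks emphasized in the text (and otherwise requires the additional argument that $g^i(\X)$ is small or that third derivatives are bounded).
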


Notice that in this expression, we have decomposed the loss function into two components: one representing the usual distillation loss on the samples, and the second regularizer term representing the Jacobian matching loss. The higher order error terms are small for small $\sigma$ and can be ignored. The above proposition is a simple consequence of using the first-order Taylor series expansion around $x$. Note that the error term is exactly zero for piecewise-linear nets. An analogous statement is true for the case of cross entropy error between soft targets, leading to:
\vspace*{-5pt}
\begin{align}
\label{eqn:TSCE}
&\mathbb{E}_{\XI}\left[-\sum_{i=1}^{k} \mathcal{T}^i_s(\X + \XI) \log\left(\mathcal{S}^i_s(\X + \XI)\right)\right] \\ & \approx  -\sum_{i=1}^{k} \mathcal{T}^i_s(\X) \log(\mathcal{S}^i_s(\X)) ~-~ \sigma^2 \sum_{i=1}^{k} \frac{\nabla_x\mathcal{T}^i_s(\X)^T \nabla_x\mathcal{S}^i_s(\X)}{\mathcal{S}^i_s(\X)} \nonumber
\end{align}

where $\mathcal{T}^i_s(\X)$ denotes the same network $\mathcal{T}^i(\X)$ but with a softmax or sigmoid (with temperature parameter $T$ if needed) added at the end. We do not write the super-linear error terms for convenience. This shows that the Jacobian matching loss depends crucially on the loss used to match activations. This observation can be used in practice to pick appropriate loss function by choosing a specific noise model of interest.

To summarize, these statements show that matching Jacobians is a natural consequence of matching not only the raw CNN outputs at the given data points, but also at the infinitely many data points nearby. This is illustrated in Figure \ref{fig:noise}, which shows that by matching on a noise-augmented dataset, the student is able to mimic the teacher better.

We can use a similar idea to derive regularizers for the case of regular neural network training as well. These regularizers seek to make the underlying model \textit{robust} to small amounts of noise added to the inputs.

\begin{prop}
\label{prop2}
	Consider the squared error cost function  for training a neural network with $k$ targets, given by $\ell(y(\X), \mathcal{S}(\X)) = \sum_{i=1}^{k}(y^i(\X) - \mathcal{S}^i(\X))^2$, where $\X \in \mathbb{R}^D$ is an input data point, and $y^i(\X)$ is the $i^{th}$ target output. Let $\XI ~(\in \mathbb{R}^D)= \sigma~\mathbf{z}$ be a scaled version of a unit normal random variable $\mathbf{z}~ \in \mathbb{R}^D$ with scaling factor $\sigma \in \mathbb{R}$. Then the following is true.
\vspace*{-2pt}
	\begin{align*}
	&\mathbb{E}_{\XI}\left[\sum_{i=1}^{k} \left(y^i(\X) - \mathcal{S}^i(\X + \XI)\right)^2\right] \\ & = \sum_{i=1}^{k}\left(y^i(\X) - \mathcal{S}^i(\X)\right)^2 + \sigma^2 \sum_{i=1}^{k} \|\nabla_x\mathcal{S}^i(\X)\|_2^2 + \mathcal{O}(\sigma^4)
	\end{align*}

\end{prop}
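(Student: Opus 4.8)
The cleanest route is to observe that Proposition~\ref{prop2} is the special case of Proposition~\ref{prop1} in which the ``teacher'' is a locally constant function. The fixed target $y^i(\X)$ is not perturbed by the noise, so it behaves exactly like a function $\mathcal{T}^i$ that equals $y^i(\X)$ on the whole neighbourhood $\{\X + \XI\}$; such a function has $\mathcal{T}^i(\X) = y^i(\X)$ and $\nabla_x\mathcal{T}^i(\X) = 0$. So first I would state this reduction and substitute these two facts into the conclusion of Proposition~\ref{prop1}: the teacher Jacobian term disappears and the teacher activation term becomes $\sum_i (y^i(\X) - \mathcal{S}^i(\X))^2$, which is precisely the claimed identity. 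Everything then rides on Proposition~\ref{prop1}, which the excerpt already grants.

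For completeness, and to exhibit the error term explicitly, I would also carry out the direct expansion. Write $a_i := y^i(\X) - \mathcal{S}^i(\X)$ and $\mathbf{g}_i := \nabla_x\mathcal{S}^i(\X)$, and apply Taylor's theorem to get $\mathcal{S}^i(\X+\XI) = \mathcal{S}^i(\X) + \mathbf{g}_i^T\XI + r_i(\XI)$ with $r_i(\XI) = \mathcal{O}(\|\XI\|^2)$ in general and $r_i(\XI) = 0$ for small $\|\XI\|$ when the network is piecewise-linear (the ReLU/MaxPool case discussed after Equation~\ref{defn}). Then
\begin{align*}
\big(y^i(\X) - \mathcal{S}^i(\X+\XI)\big)^2 &= a_i^2 - 2a_i\,\mathbf{g}_i^T\XI + (\mathbf{g}_i^T\XI)^2 \\ &\quad - 2r_i(\XI)\big(a_i - \mathbf{g}_i^T\XI\big) + r_i(\XI)^2,
\end{align*}
and I would take $\mathbb{E}_{\XI}$ using $\XI = \sigma\mathbf{z}$ with $\mathbf{z}\sim\mathcal{N}(0,I_D)$: the linear term vanishes since $\mathbb{E}[\XI] = 0$, the quadratic term gives $\mathbf{g}_i^T\mathbb{E}[\XI\XI^T]\mathbf{g}_i = \sigma^2\|\mathbf{g}_i\|_2^2$ by $\mathbb{E}[\XI\XI^T] = \sigma^2 I_D$, and the remainder contributions are swept into $\mathcal{O}(\sigma^4)$. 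Summing over $i = 1,\dots,k$ produces the stated decomposition.

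The only real obstacle is the honest bookkeeping of the remainder terms $\mathbb{E}[\,r_i(\XI)(a_i - \mathbf{g}_i^T\XI)\,]$ and $\mathbb{E}[\,r_i(\XI)^2\,]$: for a general smooth net the leading piece is the coupling of the zeroth-order residual $a_i$ with the curvature of $\mathcal{S}^i$ at $\X$, and one has to argue this is negligible at small $\sigma$ — it is exactly zero when $\mathcal{S}^i$ is locally affine (piecewise-linear networks), or when training has driven the residual $a_i$ to zero. The rest — vanishing odd Gaussian moments and the identity $\mathbb{E}[\XI\XI^T] = \sigma^2 I_D$ — is routine, and the same caveat is inherited by the Proposition~\ref{prop1} route above.
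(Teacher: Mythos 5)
Your proposal is correct and matches the paper's own treatment: the supplementary material proves Proposition~\ref{prop1} by exactly this first-order Taylor expansion (zero-mean noise kills the cross term, the isotropic covariance yields $\sigma^2\|\nabla_x\mathcal{S}^i(\X)\|_2^2$) and then states that ``proofs of other statements are similar,'' which is your direct-expansion route; your observation that Proposition~\ref{prop2} is the constant-teacher special case of Proposition~\ref{prop1} is a clean equivalent packaging of the same argument. Your closing caveat about the remainder is in fact \emph{more} careful than the paper: the coupling $\mathbb{E}[r_i(\XI)\,a_i]$ contributes a term of order $\sigma^2 a_i\,\mathrm{tr}(\nabla_x^2\mathcal{S}^i(\X))$ for a general smooth network, so the stated $\mathcal{O}(\sigma^4)$ error is only literally justified in the piecewise-linear (or small-residual) regime the paper invokes, and you are right to flag this.
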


A statement similar to Proposition \ref{prop2} has been previously derived by~\citet{bishop1995training}, who observed that the regularizer term for linear models corresponds exactly to the well-known Tikhonov regularizer. This regularizer was also proposed by~\citet{drucker1992improving}. The $\ell_2$ weight decay regularizer for neural networks can be derived by applying this regularizer layer-wise separately. However, we see here that a more appropriate way to ensure noise robustness is to penalize the norm of the Jacobian rather than weights. We can derive a similar result for the case of cross-entropy error as well, which is given by -

\begin{align}
\label{eqn:RegCE}
&\mathbb{E}_{\XI}\left[- \sum_{i=1}^{k} y^i(\X) \log(\mathcal{S}^i_s(\X + \XI))\right] \\ & \approx  - \sum_{i=1}^{k} y^i(\X) \log(\mathcal{S}^i_s(\X)) ~+~ \sigma^2 \sum_{i=1}^{k} y^i(\X) \frac{\|\nabla_x\mathcal{S}^i_s(\X)\|_2^2}{\mathcal{S}^i_s(\X)^2} \nonumber
\end{align}

We notice here again that the regularizer involves $\mathcal{S}^i_s(\X)$, which has the sigmoid / softmax nonlinearity applied on top of the final layer of $\mathcal{S}^i(\X)$. Deriving all the above results is a simple matter of using first-order Taylor series expansions, and a second-order expansion in the case of Equation \ref{eqn:RegCE}. Proof is provided in the supplementary material.

In all cases above we see that the regularizers for cross-entropy error term seem more unstable when compared to those for squared error. We find that this is true experimentally as well. As a result, we use squared error loss for distillation.

\subsection{Approximating the Full Jacobian}
One can see that both in the case of Proposition \ref{prop1} and \ref{prop2}, we are required to compute the full Jacobian. This is computationally expensive, and sometimes unnecessary. For example, Equation \ref{eqn:RegCE} requires only the terms where $y^i(\X)$ is non-zero.

In general, we can approximate the summation of Jacobian terms with the one with largest magnitude. However, we cannot estimate this without computing the Jacobians themselves. As a result, we use a heuristic where the only output variable involving the correct answer $c \in [1,k]$ is used for computing the Jacobian. This corresponds to the case of Equation \ref{eqn:RegCE}. Alternately, if we do not want to use the labels, we may instead use the output variable with the largest magnitude, as it often corresponds to the right label (for good models). 

\section{Transfer Learning} 

We now use our Jacobian matching machinery to transfer learning problems. In computer vision, transfer learning is often done by fine-tuning~\citep{yosinski2014transferable}, where models pre-trained on a large dataset $\mathcal{D}_l$, such as Imagenet~\citep{russakovsky2015imagenet}, are used as initialization for training on another smaller dataset $\mathcal{D}_s$. We can also refer to these as the source dataset and the target dataset respectively. Practically, this means that the architecture used for fine-tuning must be the same as that of the pre-trained network, which is restrictive. We would like to develop transfer learning methods where the architectures of the pre-trained network and target ``fine-tuned'' network can be arbitrarily different. 

\begin{figure}
\centering
\includegraphics[width=8.5cm]{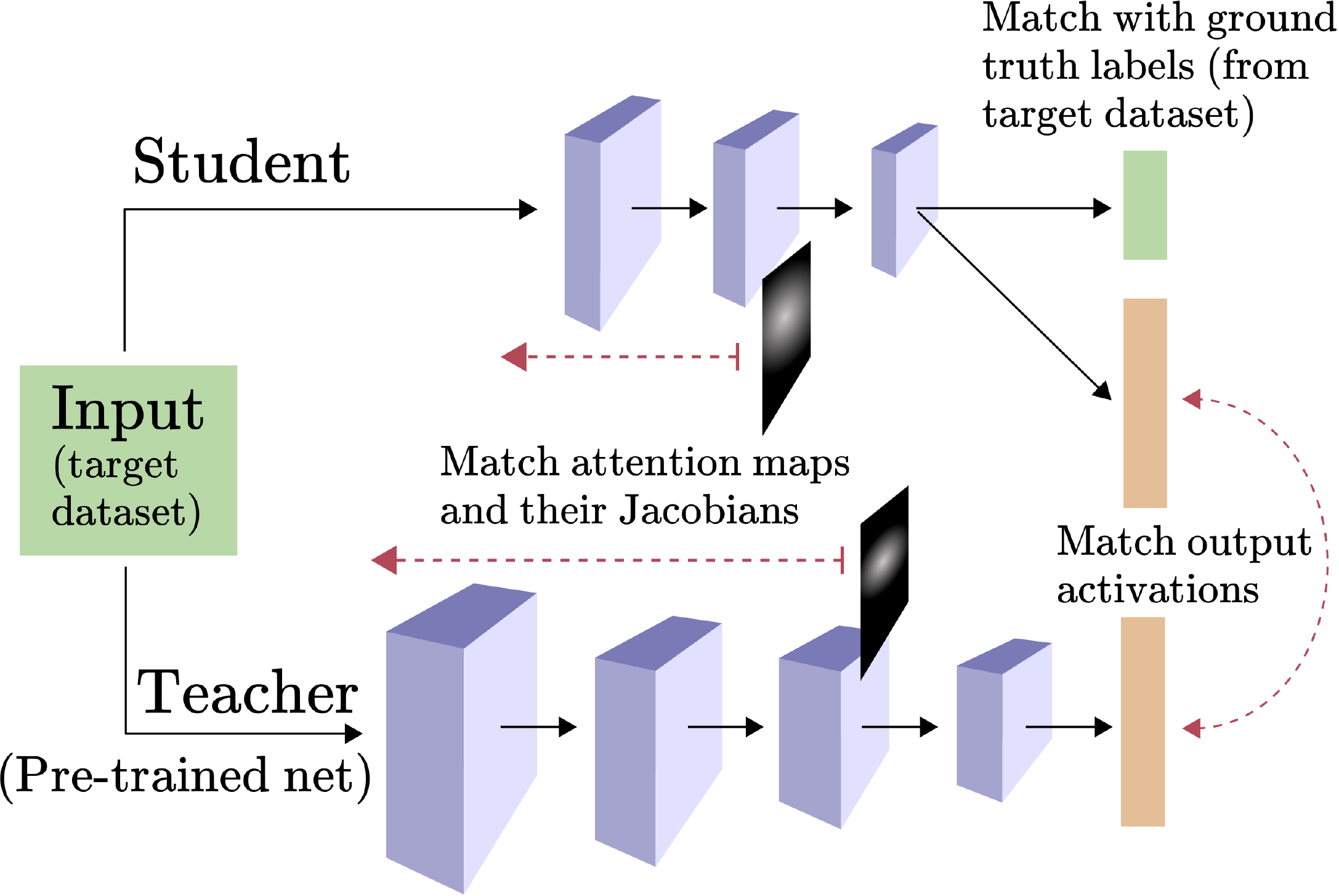}
\caption{Illustration of our proposed method for transfer learning. We match the output activations of a pre-trained Imagenet network similar to LwF~\citep{li2016learning}. We also match aggregated activations or ``attention'' maps between networks, similar to the work of~\citet{zagoruyko2016paying}. We propose to match Jacobians of (aggregated) attention maps w.r.t. inputs. }
\label{fig:block_diagram}
\end{figure}

One way to achieve this is by distillation: we match output activations of a pre-trained teacher network and an untrained student network. However, this procedure is not general as the target dataset may not share the same label space as the large source dataset. This leads to size mismatch between output activations of the teacher and the student. To overcome this, we can design the student network to have two sets of outputs (or two output ``branches''), one with the label space of the smaller target dataset, while the other with that of the larger source dataset. This leads to the method proposed by~\citet{li2016learning}, called ``Learning without Forgetting'' (\textbf{LwF}). Note that similar methods were concurrently developed by~\citet{jung2016less} and~\citet{furlanello2016active}. In this method, the student network is trained with a composite loss function involving two terms, one in each output branch. The two objectives are \textbf{(1)} matching ground truth labels on the target dataset, and \textbf{(2)} matching the activations of the student network and a pre-trained teacher network on the target dataset.  This is illustrated in Figure \ref{fig:block_diagram}. Crucially, these losses are matched only on the target dataset, and the source data is untouched. This is conceptually different from distillation, where the teacher network is trained on the dataset being used. In LwF, the pre-trained teacher is not trained on the target dataset.

This makes it problematic to apply our Jacobian matching framework to LwF. For distillation, it is clear that adding input noise (or Jacobian matching) can improve overall matching as shown in Figure \ref{fig:noise}. For the case of LwF, it is not clear whether improving matching between teacher and student will necessarily improve transfer learning performance. This is especially because the teacher is not trained on the target dataset, and can potentially produce noisy or incorrect results on this data. To resolve this ambiguity, we shall now connect LwF with distillation.

\subsection{LwF as Distillation} 

In the discussion below we shall only consider the distillation-like loss of LwF, and ignore the branch which matches ground truth labels. For LwF to work well, it must be the case that the activations of the pre-trained teacher network on the target dataset must contain information about the source dataset (\latin{i.e.}; Imagenet). The attractiveness of LwF lies in the fact that this is done without explicitly using Imagenet. Here, we make the claim that \latin{LwF approximates distillation on (a part of) Imagenet.}

Let $f(\cdot)$ be an untrained neural network, $g(\cdot)$ be a pre-trained network, $\X,\Y$ be the input image and corresponding ground truth label respectively. Let $|\mathcal{D}|$ be the size of the dataset $\mathcal{D}$. Let us denote $\rho(\X) = \ell(f(\X), g(\X))$ for convenience. Assume Lipschitz continuity for $\rho(\X)$ with Lipschitz constant $\mathrm{K}$, and distance metric $\psi_{\X}$ in the input space
\begin{align}
\| \rho(\X_1) - \rho(\X_2) \| \le \mathrm{K} \psi_{\X}(\X_1, \X_2) 
\end{align}
Note here that the distance in the input space need not be in terms of pixelwise distances, but can also be neural net feature distance, for example. Let us also define an assymetric version of the Hausdorff distance between two sets $A, B$:
 \vspace*{-10pt}
\begin{align}
\mathcal{H}_a(A, B) = \sup_{a \in A} \inf_{b \in B} \psi_{\X}(a,b).
\end{align}
%
Note that this is no longer a valid distance metric unlike the Hausdorff. Given these assumptions, we are now ready to state our result.  

\begin{prop}
\label{eqn:prop3}
Given the assumptions and notations described above, we have
\begin{align}
\frac{1}{|\mathcal{D}_l|} \sum_{\X \sim \mathcal{D}_l} \ell (f(\X), g(\X)) &\le  \max_{\X \sim \mathcal{D}_s} \ell (f(\X), g(\X)) \label{term:max_loss} \\ &+ \mathrm{K} \mathcal{H}_a(\mathcal{D}_l, \mathcal{D}_s) \label{term:distance}
\end{align}

\end{prop}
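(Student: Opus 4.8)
The plan is to reduce the average over the large source dataset $\mathcal{D}_l$ to a pointwise comparison with a nearest neighbour in the small target dataset $\mathcal{D}_s$, and then to control the resulting discrepancy using the Lipschitz assumption on $\rho$. First I would fix an arbitrary $\X \in \mathcal{D}_l$. Since $\mathcal{D}_s$ is a finite set, the infimum $\inf_{\X' \in \mathcal{D}_s} \psi_{\X}(\X, \X')$ is attained at some point $\X^{\star} = \X^{\star}(\X) \in \mathcal{D}_s$. Writing $\rho(\X) = \rho(\X^{\star}) + \big(\rho(\X) - \rho(\X^{\star})\big)$, I would bound the second summand by Lipschitz continuity, $\rho(\X) - \rho(\X^{\star}) \le \| \rho(\X) - \rho(\X^{\star}) \| \le \mathrm{K}\, \psi_{\X}(\X, \X^{\star}) = \mathrm{K} \inf_{\X' \in \mathcal{D}_s} \psi_{\X}(\X, \X')$, and the first summand trivially by $\rho(\X^{\star}) \le \max_{\X' \in \mathcal{D}_s} \rho(\X')$. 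This yields, for every $\X \in \mathcal{D}_l$,
$$ \rho(\X) \;\le\; \max_{\X' \in \mathcal{D}_s} \rho(\X') \;+\; \mathrm{K} \inf_{\X' \in \mathcal{D}_s} \psi_{\X}(\X, \X'). $$

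The second step is to average this inequality over $\X \in \mathcal{D}_l$. The first term on the right-hand side does not depend on $\X$, so it is unchanged by averaging. For the second term, I would bound each $\inf_{\X' \in \mathcal{D}_s} \psi_{\X}(\X, \X')$ by its supremum over $\X \in \mathcal{D}_l$, which is exactly $\mathcal{H}_a(\mathcal{D}_l, \mathcal{D}_s)$ by the definition given in the excerpt. Unfolding $\rho(\X) = \ell(f(\X), g(\X))$ then gives
$$ \frac{1}{|\mathcal{D}_l|} \sum_{\X \sim \mathcal{D}_l} \ell(f(\X), g(\X)) \;\le\; \max_{\X \sim \mathcal{D}_s} \ell(f(\X), g(\X)) \;+\; \mathrm{K}\, \mathcal{H}_a(\mathcal{D}_l, \mathcal{D}_s), $$
which is the claimed bound.

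I do not expect any genuinely hard step here: the argument is an elementary nearest-neighbour-plus-Lipschitz estimate. The only points requiring a little care are (i) that in replacing the signed difference $\rho(\X) - \rho(\X^{\star})$ by $\|\rho(\X) - \rho(\X^{\star})\|$ one only loses a sign, which is harmless since an upper bound is sought; and (ii) that finiteness of $\mathcal{D}_s$ is what guarantees the minimiser $\X^{\star}$ exists — if one wanted to allow infinite $\mathcal{D}_s$ one would instead pick a near-minimiser and pass to the limit. The substantive content of the proposition lies entirely in the modelling setup, namely the interpretation of the right-hand side as ``worst-case LwF loss on the target data plus source/target distribution mismatch''; the manipulation itself is routine.
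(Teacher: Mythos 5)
Your proposal is correct and follows essentially the same route as the paper's own proof: pick the nearest neighbour in $\mathcal{D}_s$ for each point of $\mathcal{D}_l$, apply the Lipschitz bound on $\rho$, bound the nearest-neighbour values by the max over $\mathcal{D}_s$ and the nearest-neighbour distances by $\mathcal{H}_a(\mathcal{D}_l, \mathcal{D}_s)$, then average. The only cosmetic difference is that you bound $\rho(\X^{\star})$ by the max before averaging whereas the paper averages first; this changes nothing.
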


On the left side of \ref{term:max_loss} we have the distillation loss on the source dataset, and on the right we have a max-loss term on the target dataset. Note that the LwF loss is an average loss on the target dataset. As expected, the slack terms in the inequality depends on the distance between the source and target datasets (\ref{term:distance}). This bounds a loss related to the LwF loss (\latin{i.e.} max-loss instead of average) with the distillation loss. If the Hausdorff distance is small, then reducing the max-loss would reduce the distillation loss as well. A similar theory was previously presented by \citet{ben2010theory}, but with different formalisms. Our formalism allows us to connect with Jacobian matching, which is our primary objective.

In practice, the target dataset is often much smaller than the Imagenet and has different overall statistics. For example, the target dataset could be a restricted dataset of flower images. In such a case, we can restrict the source dataset to its ``best'' subset, in particular with all the irrelevant samples (those far from target dataset) removed. This would make the Hausdorff distance smaller, and provide a tighter bound. In our example, this involves keeping only flowers from Imagenet.

This makes intuitive sense as well: if the source and target datasets are completely different, we do not expect transfer learning (and thus LwF) to help. The greater the overlap between source and target datasets, the smaller is the Hausdorff distance, the tighter is the bound, and the more we expect knowledge transfer to help. Our results capture this intuition in a rigorous manner. In addition, this predicts that Lipschitz-smooth teacher neural nets that produce small feature distance between source and target images are expected to do well in transfer learning. Lipschitz-smoothness of models has been previously related to adversarial noise robustness~\citep{cisse2017parseval}, and to learning theory as a sufficient condition for generalization~\citep{xu2012robustness}. It is interesting that this relates to transfer learning as well.  

More importantly, this establishes LwF as a distillation method. The following result motivates input noise added to the target dataset.

\begin{corollary}
For any superset $\mathrm{D'}_s \supseteq \mathcal{D}_s$ of the target dataset, $ \mathcal{H}_a(\mathcal{D}_l, \mathcal{D'}_s) \leq \mathcal{H}_a(\mathcal{D}_l, \mathcal{D}_s)$
\end{corollary}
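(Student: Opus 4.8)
The plan is to unfold the definition of the asymmetric Hausdorff-type distance and exploit the monotonicity of the infimum under set enlargement. Recall that
\begin{align}
\mathcal{H}_a(\mathcal{D}_l, \mathcal{D}_s) = \sup_{a \in \mathcal{D}_l} \inf_{b \in \mathcal{D}_s} \psi_{\X}(a,b),
\end{align}
and similarly for $\mathcal{D}'_s$. The key observation is that the first argument $\mathcal{D}_l$ is identical on both sides, so only the inner infimum is affected by replacing $\mathcal{D}_s$ with its superset $\mathcal{D}'_s$.

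First I would fix an arbitrary $a \in \mathcal{D}_l$ and compare $\inf_{b \in \mathcal{D}'_s} \psi_{\X}(a,b)$ with $\inf_{b \in \mathcal{D}_s} \psi_{\X}(a,b)$. Since $\mathcal{D}_s \subseteq \mathcal{D}'_s$, the infimum on the left is taken over a larger index set, hence $\inf_{b \in \mathcal{D}'_s} \psi_{\X}(a,b) \le \inf_{b \in \mathcal{D}_s} \psi_{\X}(a,b)$; this step uses nothing about $\psi_{\X}$ beyond it being a real-valued (nonnegative) function, so the fact that $\psi_{\X}$ need not be a genuine metric is irrelevant. Next I would take the supremum over $a \in \mathcal{D}_l$ on both sides: since the inequality holds pointwise in $a$, it is preserved under $\sup_{a \in \mathcal{D}_l}$, giving exactly $\mathcal{H}_a(\mathcal{D}_l, \mathcal{D}'_s) \le \mathcal{H}_a(\mathcal{D}_l, \mathcal{D}_s)$.

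There is essentially no hard part here — the statement is a one-line consequence of monotonicity of $\inf$ and $\sup$. The only thing to be careful about is the asymmetry of $\mathcal{H}_a$: enlarging the \emph{second} argument shrinks the distance, whereas enlarging the first would have the opposite effect, so the direction of the inclusion matters and should be stated explicitly. I would then close by noting the intended consequence for the paper: augmenting the target dataset $\mathcal{D}_s$ (e.g.\ by adding input noise, so that the noisy dataset $\mathcal{D}'_s$ is a superset of $\mathcal{D}_s$) can only tighten the bound of Proposition~\ref{eqn:prop3}, which is the motivation for applying Jacobian matching in the transfer-learning setting.
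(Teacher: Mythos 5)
Your proof is correct and rests on the same core observation as the paper's: for each fixed $a \in \mathcal{D}_l$ the infimum over the superset $\mathcal{D}'_s$ can only be smaller, and the inequality survives taking the supremum over $\mathcal{D}_l$. The paper phrases this via ``closest points'' and embeds it in the Lipschitz inequality from Proposition 3, but your direct $\inf$/$\sup$ monotonicity argument is the same proof, stated more cleanly.
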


Thus if we augment the target dataset $\mathcal{D}_s$ by adding noise, we expect the bound in Proposition \ref{eqn:prop3} to get tighter. This is because when we add noise to points in $\mathcal{D}_s$, the minimum distance between points from $\mathcal{D}_l$ to $\mathcal{D}_s$ decreases. Proofs are provided in the supplementary material. 

To summarize, we have showed that a loss related to the LwF loss (max-loss) is an upper bound on the true distillation loss. Thus by minimizing the upper bound, we can expect to reduce the distillation loss also.

\subsubsection{Incorporating Jacobian matching} Now that input noise and thus Jacobian matching is well motivated, we can incorporate these losses into LwF. When we implemented this for practical deep networks we found that the optimizer wasn't able to reduce the Jacobian loss at all. We conjecture that it might be because of a vanishing gradient effect / network degeneracy on propagation of second order gradients through the network (and not the first). As a result, we need an alternative way to match Jacobians.

\subsection{Matching attention maps}

It is often insufficient to match only output activations between a teacher and a student network, especially when both networks are deep. In such cases we can consider matching intermediate feature maps as well. In general it is not possible to match the full feature maps between an arbitrary teacher and student network as they may have different architectures, and features sizes may never match at any layer. However, for modern convolutional architectures, spatial sizes of certain features often match across architectures even when the number of channels doesn't.~\citet{zagoruyko2016paying} noticed that it in such cases it helps to match channelwise aggregated activations, which they call \latin{attention} maps. Specifically, this aggregation is performed by summing over squared absolute value of channels of a feature activation $Z$, and is given by -

\vspace*{-10pt}
\begin{equation}
\label{eqn:attention}
A = AttMap(Z) = \sum_{\mathclap{{i \in channels}}} |Z_i|^2
\end{equation}

Further, the loss function used to match these activations is 

\begin{equation}
\label{eqn:matchact}
\mathrm{Match~Activations} = \left|\left| \frac{A_t}{\|A_t \|_2} - \frac{A_s}{\| A_s \|_2 }\right|\right|_2
\end{equation}

Here, $A_t, A_s$ are the attention maps of the teacher and student respectively.~\citet{zagoruyko2016paying} note that this choice of loss function is especially crucial. 

\subsubsection{Incorporating Jacobian loss} As the activation maps have large spatial dimensions, it is computationally costly to compute the full Jacobians. We hence resort to computing approximating Jacobians in the same manner as previously discussed. In this case, this leads to picking the pixel in the attention map with the largest magnitude, and computing the Jacobian of this quantity w.r.t. the input. We compute the index $(i,j)$ of this maximum-valued pixel for the teacher network and use the same index to compute the student's Jacobian. We then use a loss function similar to Equation~\ref{eqn:matchact}, given by
\begin{equation}
\label{eqn:match_jacobian}
\mathrm{Match~Jacobians} = \left|\left| \frac{\nabla_x f(\X)}{\| \nabla_x f(\X) \|_2} - \frac{\nabla_x g(\X)}{\| \nabla_xg(\X) \|_2 }\right|\right| _2^2
\end{equation}

\subsubsection{Justification for Jacobian loss} We can show that the above loss term corresponds to adding a noise term $\XI_f \propto \| \nabla_x f(\X) \|^{-1}_2 $ for $f(\X)$ and $\XI_g \propto \| \nabla_x g(\X) \|^{-1}_2 $ for $g(\X)$ for the distillation loss. From the first order Taylor series expansion, we see that $g(x + \XI) = g(x) + \XI_g \nabla_x g(\X)$. Thus for networks $f(\cdot)$ and $g(\cdot)$  with different Jacobian magnitudes, we expect different responses for the same noisy inputs. Specifically, we see that $\mathbb{E}_{\XI_g} \| g(x + \XI_g) - g(x) \|^2_2 = \sigma_g^2 \| \nabla_x g(\X) \|^2_2 = \sigma^2 \frac{\| \nabla_x g(\X) \|^2_2}{\| \nabla_x g(\X) \|^2_2} = \sigma^2 $ for a gaussian model with covariance matrix being $\sigma$ times the identity.

\section{Experiments}
We perform three experiments to show the effectiveness of using Jacobians. First, we perform distillation in a limited data setting on the CIFAR100 dataset~\citep{krizhevsky2009learning}. Second, we show on that same dataset that penalizing Jacobian norm increases stability of networks to random noise. Finally, we perform transfer learning experiments on the MIT Scenes dataset~\citep{quattoni2009recognizing}.

\subsection{Distillation}
For the distillation experiments, we use VGG-like~\citep{simonyan2014very} architectures with batch normalization. The main difference is that we keep only the convolutional layers and have only one fully connected layer rather than three. Our workflow is as follows. First, a 9-layer ``teacher'' network is trained on the full CIFAR100 dataset. Then, a smaller 4-layer ``student'' network is trained, but this time on small subsets rather than the full dataset. As the teacher is trained on much more data than the student, we expect distillation to improve the student's performance.

A practical scenario where this would be useful is the case of architecture search and ensemble training, where we require to train many candidate neural network architectures on the same task. Distillation methods can help speed up such methods by using already trained networks to accelerate training of newer models. One way to achieve acceleration is by using less data to train the student dataset.

We compare our methods against the following baselines. \textbf{(1): Cross-Entropy (CE) training} -- Here we train the student using only the ground truth (hard labels) available with the dataset without invoking the teacher network. \textbf{(2): CE + match activations} -- This is the classical form of distillation, where the activations of the teacher network are matched with that of the student. This is weighted with the cross-entropy term which uses ground truth targets. \textbf{(3): Match activations only} -- Same as above, except that the cross-entropy term is not used in the loss function.

We compare these methods by appending the Jacobian matching term in each case. In all cases, we use the squared-error distillation loss shown in Proposition \ref{prop1}. We found that squared loss worked much better than the cross-entropy loss for distillation and it was much easier to tune. 

From Table \ref{C100table} we can conclude that (1) it is generally beneficial to do any form of distillation to improve performance, (2) matching Jacobians along with activations outperforms matching only activations in low-data settings, (3) not matching Jacobians is often beneficial for large data.

One interesting phenomenon we observe is that having a cross-entropy (CE) error term is often not crucial to maintain good performance. It is only slightly worse than using ground truth labels.

For Table \ref{C100table}, we see that when training with activations, Jacobians and regular cross-entropy training (fourth row), we reach an accuracy of $52.43\%$ when training with 100 data points per class. We observe that the overall accuracy of raw training using the full dataset is $54.28\%$. Thus we are able to reach close to the full training accuracy using $1/5^{th}$ of the data requirement.

\begin{table*}
\caption{Distillation performance on the CIFAR100 dataset. Table shows test accuracy (\%). We find that matching both activations and Jacobians along with cross-entropy error performs the best for limited-data settings. The student network is VGG-4 while the teacher is a VGG-9 network which achieves $64.78 \% $ accuracy.}

\center
  \begin{tabular}{ccccccc}
  \hline \rule{0pt}{2.3ex} 
    \textbf{\# of Data points per class $\rightarrow$ } & 1 & 5 & 10 & 50 & 100 & 500 (full) \\ \hline \rule{0pt}{2.3ex} 
    \textbf{Cross-Entropy (CE) training} & 5.69 & 13.9 & 20.03 & 37.6 & 44.92 & 54.28 \\ \hline \rule{0pt}{2ex} 
    \textbf{CE + match activations} & 12.13 & 26.97 & 33.92 & 46.47 & 50.92 & \textbf{56.65} \\ 
    \textbf{CE + match Jacobians} & 6.78 & 23.94 & 32.03 & 45.71 & 51.47 & 53.44  \\ 
    \textbf{CE + match \{activations + Jacobians\}} & \textbf{13.78} & \textbf{33.39} & \textbf{39.55} & \textbf{49.49} & \textbf{52.43} & 54.57 \\ \hline \rule{0pt}{2ex} 
    \textbf{Match activations only} & 10.73 & 28.56 & 33.6 & 45.73 & 50.15 & 56.59 \\ 
    \textbf{Match \{activations + Jacobians\}} & 13.09 & 33.31 & 38.16 & 47.79 & 50.06 & 51.33 \\
\hline
  \end{tabular}
  \label{C100table}
\end{table*}

\subsection{Noise robustness}
We perform experiments where we penalize the Jacobian norm to improve robustness of models to random noise. We train 9-layer VGG networks on CIFAR100 with varying amount of the regularization strength ($\lambda$), and measure their classification accuracy in presence of noise added to the normalized images. From Table \ref{noisetable} we find that using higher regularization strengths is better for robustness, even when the initial accuracy at the zero-noise case is lower. This aligns remarkably well with theory. Surprisingly, we find that popular regularizers such as $\ell_2$ regularization and dropout~\citep{srivastava2014dropout} are not robust.

\begin{table*}[!h]
\caption{Robustness of various VGG-9 models to gaussian noise added to CIFAR100 images at test time. Table shows test accuracy (\%). $\lambda$ is the regularization strength of the Jacobian-norm penalty regularizer. $\gamma$ is the $\ell_2$ regularization strength and $p$ is the dropout value. We see that the Jacobian-norm penalty can be remarkably robust to noise, unlike $\ell_2$ regularization and dropout. }
\center
  \begin{tabular}{cccccc}
  \hline \rule{0pt}{2.3ex} 
    \textbf{Noise std. dev. $\rightarrow$} & 0 & 0.1 & 0.2 & 0.3 & 0.4 \\ \hline 
    \textbf{$\lambda = 0$} & 64.78 & $61.9 \pm 0.07$ & $47.53 \pm 0.23$ & $29.63 \pm 0.16$ & $17.69 \pm 0.17$ \\ \hline \rule{0pt}{2.3ex} 
    \textbf{$\lambda = 0.1$} & {65.62} & $\bm{63.36 \pm 0.18}$ & $53.57 \pm 0.23$ & $37.38 \pm 0.18$ & $23.99 \pm 0.19$ \\  
    \textbf{$\lambda = 1.0$} & 63.59 & $62.66 \pm 0.13$ & ${57.53 \pm 0.17}$ & ${47.48 \pm 0.14}$ & ${35.43 \pm 0.11}$ \\ 
    \textbf{$\lambda = 10.0$} & 61.37 & $60.78 \pm 0.05$ & $\bm{58.28 \pm 0.13}$ & $\bm{52.82 \pm 0.10}$ & $\bm{44.96 \pm 0.19}$\\  \hline \rule{0pt}{2.3ex} 
    $\ell_2$ regularization ($\gamma = 1e-3$) & \textbf{66.92} & $60.41 \pm 0.27$ & $39.93 \pm 0.28$ & $23.32 \pm 0.25$ & $13.76 \pm 0.16$ \\ 
    Dropout $(p = 0.25)$ & 66.8 & $61.53 \pm 0.10$ & $44.34 \pm 0.19$ & $26.70 \pm 0.24$ & $15.77 \pm 0.11$ \\ \hline
  \end{tabular}
  \label{noisetable}
\end{table*}

\subsection{Transfer Learning}
For transfer learning, our objective is to improve training on the target dataset (MIT Scenes) by using Imagenet pre-trained models. Crucially, we want our MIT Scenes model to have a different architecture than the Imagenet model. The teacher model we use is a ResNet-34~\citep{he2016deep} pre-trained on Imagenet, while the student model is an untrained VGG-9 model with batch normalization. We choose VGG-9 because its architecture is fundamentally different from a ResNet. In principle we could use any architecture for the student. We modify this VGG-9 architecture such that it has two sets of outputs, one sharing the label space with Imagenet (1000 classes), and another with MIT Scenes (67 classes, $\sim$ 6k images). The pre-final layer is common to both outputs.

Our baselines are the following. \textbf{(1): Cross-Entropy (CE) training of student with ground truth} -- Here we ignore the VGG-9 branch with 1000 classes and optimize the cross-entropy error on MIT Scenes data. The loss function on this branch is always the same for all methods. \textbf{(2): CE on pre-trained network} -- This is exactly the same as the first baseline, except that the VGG-9 model is initialized from Imagenet pre-trained weights. We consider this as our ``oracle'' method and strive to match it's performance. \textbf{(3): CE + match activations (LwF)} -- This corresponds to the method of~\citet{li2016learning}, where the Imagenet branch output activations of the student are matched with those of the teacher. \textbf{(4): CE + match \{ activations + attention\} } -- This corresponds to the method of~\citet{zagoruyko2016paying}, where attention maps are matched between some intermediate layers.

We add our Jacobian matching terms to the baselines 3 and 4. We provide our results in Table \ref{table:MIT}. In all cases, we vary the number of images per class on MIT Scenes to observe the performance on low-data settings as well. In this case we average our results over two runs by choosing different random subsets.

\begin{table*}[!htbp]
\caption{Transfer Learning from Imagenet to MIT Scenes dataset. Table shows test accuracy (\%). The student network (VGG9) is trained from scratch unless otherwise mentioned. The teacher network used is a pre-trained ResNet34. Results are averaged over two runs.}


\center
\vspace*{-10pt}
  \begin{tabular}{cccccc}
  \hline \rule{0pt}{2.3ex} 
    \textbf{\# of Data points per class $\rightarrow$} & 5 & 10 & 25 & 50 & Full \\ \hline \rule{0pt}{2.3ex} 
    \textbf{Cross-Entropy (CE) training on untrained student network} & 11.64 & 20.30 & 35.19 & 46.38 & 59.33\\ 
    \textbf{CE on pre-trained student network (Oracle)} & \textbf{25.93} & \textbf{43.81} & \textbf{57.65} & \textbf{64.18} & \textbf{71.42} \\ \hline \rule{0pt}{2.3ex} 
    \textbf{CE + match activations~\citep{li2016learning}} & 17.08 & 27.13 & 45.08 & 55.22 &  65.22  \\ 
    \textbf{CE + match \{activations + Jacobians\} } & 17.88 & 28.25 & 45.26 & 56.49 & 66.04  \\ \hline \rule{0pt}{2.3ex} 
    \textbf{CE + match \{activations + attention\}~\citep{zagoruyko2016paying} }  & 16.53 & 28.35 & 46.01 & 57.80 & \textbf{67.24}  \\ 
    \textbf{CE + match \{activations + attention + Jacobians\}} & \textbf{18.02} & \textbf{29.25} & \textbf{47.31} & \textbf{58.35} & \textbf{67.31}  \\ 
    \hline
  \end{tabular}
  \label{table:MIT}
\end{table*}

\begin{table}[h]
\caption{Ablation experiments over choice of feature matching depth. ($\mathcal{T}$, $\mathcal{S}$) denotes teacher (ResNet34) and student (VGG9) feature depths. These pairs are chosen such that resulting features have same spatial dimensions. We see that matching the shallowest feature works the best. Results are averaged over two runs.}
\center
\vspace{-10pt}
  \begin{tabular}{c|cccc}
  \hline \rule{0pt}{3ex}
    \specialcell{\textbf{Feature matching} \vspace*{-2pt}\\ \textbf{depth ($\mathcal{T}$, $\mathcal{S}$)}} & (7, 2) & (15, 4) & (27, 6) & (33, 8) \\\hline \rule{0pt}{2.5ex}
    \textbf{Accuracy (\%)} & \textbf{22.39} & 21.98 & 20.45 & 20.03 \\ \rule{0pt}{3ex}
    \specialcell{\textbf{Jacobian loss}\vspace*{-2.5pt} \\\textbf{reduction (\%)}} & \textbf{25.88} & 15.59 & 11.55 & 1.25 \\ 
    \hline
  \end{tabular}
 \vspace{-10pt}
  \label{table:MIT_ablations_feat_lvl}
 \end{table}

\begin{table}[]

\caption{Ablation experiments over the computation of Jacobian. Here, $s$ is the size of the attention map. ``Full'' is global average pooling, and ``None'' is no average pooling. We see that using average pooling while computing Jacobians helps performance. Results are averaged over two runs.}

\center

\vspace{-2ex}

  \begin{tabular}{c|ccccc}
    \hline \rule{0pt}{3ex}
    \specialcell{\textbf{Average Pool} \vspace*{-2.5pt}\\ \textbf{Window size}} & Full & $s/3$ & $s/5$ & $s/7$ & \specialcell{None}\\ \hline \rule{0pt}{2.5ex}
    \textbf{Accuracy (\%)} & 20.00 & 21.20 & \textbf{21.87} & 21.09 & 19.74 \\ 
    \hline
  \end{tabular}

  \label{table:MIT_ablations_winsize}
\end{table}

Experiments show that matching activations and attention maps increases performance at all levels of data size. It also shows that Jacobians improve performance of all these methods. However, we observe that none of the methods match the oracle performance of using a pre-trained model. The gap in performance is especially large at intermediate data sizes of $10$ and $25$ images per class. 

\subsubsection{Which features to match?} An important practical question is the choice of intermediate features to compute attention maps for matching. The recipe followed by~\citet{zagoruyko2016paying} for ResNets is to consider features at the end of a residual block. \footnote{A residual block is the set of all layers in between two consecutive max-pooling layers in a ResNet. All features in a block have the same dimensions.} As there are typically 3-5 max-pooling layers in most modern networks, we have 3-5 intermediate features to match between any typical teacher and student network. Note that we require the attention maps (channelwise aggregated features) to be of similar spatial size to match.~\citet{zagoruyko2016paying} match at all such possible locations, and we use the same approach.

However, computing Jacobians at all such locations is computationally demanding and perhaps unnecessary. We observe that if we compute Jacobians at later layers, we are still not able to reduce training Jacobian loss, possibly due to a ``second-order'' vanishing gradient effect. At suitable intermediate layers, we see that loss reduction occurs. This is reflected in Table \ref{table:MIT_ablations_feat_lvl}, where we vary the feature matching depth and observe performance. We observe that the Jacobian loss reduction (during training) is highest for the shallowest layers, and this corresponds to good test performance as well. These ablation experiments are done on the MIT Scenes dataset picking only $10$ points per class.

\subsubsection{Feature Pooling to compute Jacobians} Instead of considering a single pixel per attention map to compute Jacobians, we can aggregate a large number of large-magnitude pixels. One way to do this is by average pooling over the attention map, and then picking the maximum pixel over the average pooled map. In Table \ref{table:MIT_ablations_winsize} we vary the window size for average pooling and observe performance. We observe that it is beneficial to do average pooling, we find that using a window size of $\mathrm{(feature~size)} / 5$ works the best. These ablation experiments are done on the MIT Scenes dataset picking only $10$ points per class.

\section{Conclusion}
In this paper we considered matching Jacobians of deep neural networks for knowledge transfer. Viewing Jacobian matching as a form of data augmentation with gaussian noise motivates their usage, and also informs us about the loss functions to use. We also connected a recent transfer learning method (LwF) to distillation, enabling us to incorporate Jacobian matching.

Despite our advances, there is still a large gap between distillation-based methods and the oracle method of using pre-trained nets for transfer learning. Future work can focus on closing this gap by considering more structured forms of data augmentation than simple noise addition.

\bibliography{icml_bib}

\begin{thebibliography}{23}
\providecommand{\natexlab}[1]{#1}
\providecommand{\url}[1]{\texttt{#1}}
\expandafter\ifx\csname urlstyle\endcsname\relax
  \providecommand{\doi}[1]{doi: #1}\else
  \providecommand{\doi}{doi: \begingroup \urlstyle{rm}\Url}\fi

\bibitem[Ba \& Caruana(2014)Ba and Caruana]{ba2014deep}
Ba, LJ and Caruana, R.
\newblock Do deep networks really need to be deep.
\newblock \emph{Advances in neural information processing systems},
  27:\penalty0 1--9, 2014.

\bibitem[Ben-David et~al.(2010)Ben-David, Blitzer, Crammer, Kulesza, Pereira,
  and Vaughan]{ben2010theory}
Ben-David, Shai, Blitzer, John, Crammer, Koby, Kulesza, Alex, Pereira,
  Fernando, and Vaughan, Jennifer~Wortman.
\newblock A theory of learning from different domains.
\newblock \emph{Machine learning}, 79\penalty0 (1-2):\penalty0 151--175, 2010.

\bibitem[Bishop(1995)]{bishop1995training}
Bishop, Chris~M.
\newblock Training with noise is equivalent to tikhonov regularization.
\newblock \emph{Neural Computation}, 1995.

\bibitem[Cisse et~al.(2017)Cisse, Bojanowski, Grave, Dauphin, and
  Usunier]{cisse2017parseval}
Cisse, Moustapha, Bojanowski, Piotr, Grave, Edouard, Dauphin, Yann, and
  Usunier, Nicolas.
\newblock Parseval networks: Improving robustness to adversarial examples.
\newblock In \emph{International Conference on Machine Learning}, pp.\
  854--863, 2017.

\bibitem[Czarnecki et~al.(2017)Czarnecki, Osindero, Jaderberg, {\'S}wirszcz,
  and Pascanu]{czarnecki2017sobolev}
Czarnecki, Wojciech~Marian, Osindero, Simon, Jaderberg, Max, {\'S}wirszcz,
  Grzegorz, and Pascanu, Razvan.
\newblock Sobolev training for neural networks.
\newblock \emph{NIPS}, 2017.

\bibitem[Drucker \& Le~Cun(1992)Drucker and Le~Cun]{drucker1992improving}
Drucker, Harris and Le~Cun, Yann.
\newblock Improving generalization performance using double backpropagation.
\newblock \emph{IEEE Transactions on Neural Networks}, 1992.

\bibitem[Furlanello et~al.(2016)Furlanello, Zhao, Saxe, Itti, and
  Tjan]{furlanello2016active}
Furlanello, Tommaso, Zhao, Jiaping, Saxe, Andrew~M, Itti, Laurent, and Tjan,
  Bosco~S.
\newblock Active long term memory networks.
\newblock \emph{arXiv preprint arXiv:1606.02355}, 2016.

\bibitem[He et~al.(2016)He, Zhang, Ren, and Sun]{he2016deep}
He, Kaiming, Zhang, Xiangyu, Ren, Shaoqing, and Sun, Jian.
\newblock Deep residual learning for image recognition.
\newblock In \emph{Proceedings of the IEEE conference on computer vision and
  pattern recognition}, pp.\  770--778, 2016.

\bibitem[Hinton et~al.(2015)Hinton, Vinyals, and Dean]{hinton2015distilling}
Hinton, Geoffrey, Vinyals, Oriol, and Dean, Jeff.
\newblock Distilling the knowledge in a neural network.
\newblock \emph{NIPS Deep Learning Workshop}, 2015.

\bibitem[Jung et~al.(2016)Jung, Ju, Jung, and Kim]{jung2016less}
Jung, Heechul, Ju, Jeongwoo, Jung, Minju, and Kim, Junmo.
\newblock Less-forgetting learning in deep neural networks.
\newblock \emph{arXiv preprint arXiv:1607.00122}, 2016.

\bibitem[Krizhevsky \& Hinton(2009)Krizhevsky and
  Hinton]{krizhevsky2009learning}
Krizhevsky, Alex and Hinton, Geoffrey.
\newblock Learning multiple layers of features from tiny images.
\newblock 2009.

\bibitem[Li \& Hoiem(2016)Li and Hoiem]{li2016learning}
Li, Zhizhong and Hoiem, Derek.
\newblock Learning without forgetting.
\newblock In \emph{European Conference on Computer Vision}, pp.\  614--629.
  Springer, 2016.

\bibitem[Pan \& Yang(2010)Pan and Yang]{pan2010survey}
Pan, Sinno~Jialin and Yang, Qiang.
\newblock A survey on transfer learning.
\newblock \emph{IEEE Transactions on knowledge and data engineering},
  22\penalty0 (10):\penalty0 1345--1359, 2010.

\bibitem[Quattoni \& Torralba(2009)Quattoni and
  Torralba]{quattoni2009recognizing}
Quattoni, A and Torralba, A.
\newblock Recognizing indoor scenes.
\newblock In \emph{2009 IEEE Conference on Computer Vision and Pattern
  Recognition}, 2009.

\bibitem[Romero et~al.(2014)Romero, Ballas, Kahou, Chassang, Gatta, and
  Bengio]{romero2014fitnets}
Romero, Adriana, Ballas, Nicolas, Kahou, Samira~Ebrahimi, Chassang, Antoine,
  Gatta, Carlo, and Bengio, Yoshua.
\newblock Fitnets: Hints for thin deep nets.
\newblock \emph{arXiv preprint arXiv:1412.6550}, 2014.

\bibitem[Russakovsky et~al.(2015)Russakovsky, Deng, Su, Krause, Satheesh, Ma,
  Huang, Karpathy, Khosla, Bernstein, et~al.]{russakovsky2015imagenet}
Russakovsky, Olga, Deng, Jia, Su, Hao, Krause, Jonathan, Satheesh, Sanjeev, Ma,
  Sean, Huang, Zhiheng, Karpathy, Andrej, Khosla, Aditya, Bernstein, Michael,
  et~al.
\newblock Imagenet large scale visual recognition challenge.
\newblock \emph{International Journal of Computer Vision}, 115\penalty0
  (3):\penalty0 211--252, 2015.

\bibitem[Sau \& Balasubramanian(2016)Sau and Balasubramanian]{sau2016deep}
Sau, Bharat~Bhusan and Balasubramanian, Vineeth~N.
\newblock Deep model compression: Distilling knowledge from noisy teachers.
\newblock \emph{arXiv preprint arXiv:1610.09650}, 2016.

\bibitem[Simonyan \& Zisserman(2014)Simonyan and Zisserman]{simonyan2014very}
Simonyan, Karen and Zisserman, Andrew.
\newblock Very deep convolutional networks for large-scale image recognition.
\newblock \emph{arXiv preprint arXiv:1409.1556}, 2014.

\bibitem[Srivastava et~al.(2014)Srivastava, Hinton, Krizhevsky, Sutskever, and
  Salakhutdinov]{srivastava2014dropout}
Srivastava, Nitish, Hinton, Geoffrey, Krizhevsky, Alex, Sutskever, Ilya, and
  Salakhutdinov, Ruslan.
\newblock Dropout: A simple way to prevent neural networks from overfitting.
\newblock \emph{The Journal of Machine Learning Research}, 15\penalty0
  (1):\penalty0 1929--1958, 2014.

\bibitem[Wang et~al.(2016)Wang, Mohamed, Caruana, Bilmes, Plilipose,
  Richardson, Geras, Urban, and Aslan]{wang2016analysis}
Wang, Shengjie, Mohamed, Abdel-rahman, Caruana, Rich, Bilmes, Jeff, Plilipose,
  Matthai, Richardson, Matthew, Geras, Krzysztof, Urban, Gregor, and Aslan,
  Ozlem.
\newblock Analysis of deep neural networks with extended data jacobian matrix.
\newblock In \emph{International Conference on Machine Learning}, pp.\
  718--726, 2016.

\bibitem[Xu \& Mannor(2012)Xu and Mannor]{xu2012robustness}
Xu, Huan and Mannor, Shie.
\newblock Robustness and generalization.
\newblock \emph{Machine learning}, 86\penalty0 (3):\penalty0 391--423, 2012.

\bibitem[Yosinski et~al.(2014)Yosinski, Clune, Bengio, and
  Lipson]{yosinski2014transferable}
Yosinski, Jason, Clune, Jeff, Bengio, Yoshua, and Lipson, Hod.
\newblock How transferable are features in deep neural networks?
\newblock In \emph{Advances in neural information processing systems}, pp.\
  3320--3328, 2014.

\bibitem[Zagoruyko \& Komodakis(2017)Zagoruyko and
  Komodakis]{zagoruyko2016paying}
Zagoruyko, Sergey and Komodakis, Nikos.
\newblock Paying more attention to attention: Improving the performance of
  convolutional neural networks via attention transfer.
\newblock \emph{ICLR}, 2017.

\end{thebibliography}
\bibliographystyle{icml2018}

\clearpage

\part*{Supplementary Material}

\section{Proof for Proposition 1}

\begin{prop_star}
	Consider the squared error cost function for matching soft targets of two neural networks with $k$-length targets ($ \in \mathbb{R}^k$), given by $\ell(\mathcal{T}(\X), \mathcal{S}(\X)) = \sum_{i=1}^{k}(\mathcal{T}^i(\X) - \mathcal{S}^i(\X))^2$, where $\X \in \mathbb{R}^D$ is an input data point. Let $\XI ~(\in \mathbb{R}^D)= \sigma~\mathbf{z}$ be a scaled version of a unit normal random variable $\mathbf{z}~ \in \mathbb{R}^D$ with scaling factor $\sigma \in \mathbb{R}$. Then the following is locally true.
	
	\begin{align*}
	&\mathbb{E}_{\XI}\left[\sum_{i=1}^{k}\left(\mathcal{T}^i(\X + \XI) - \mathcal{S}^i(\X + \XI)\right)^2\right] \\ & = \sum_{i=1}^{k}\left(\mathcal{T}^i(\X) - \mathcal{S}^i(\X)\right)^2 + \sigma^2 \sum_{i=1}^{k} \|\nabla_x\mathcal{T}^i(\X) - \nabla_x\mathcal{S}^i(\X)\|_2^2 \\&+ \mathcal{O}(\sigma^4)
	\end{align*}

\end{prop_star}

\begin{proof} There exists $\sigma$ and $\XI$ small enough that first-order Taylor series expansion holds true

\begin{align}
& \mathbb{E}_{\XI}\left[ \sum_{i=1}^{k}\left(\mathcal{T}^i(\X + \XI) - \mathcal{S}^i(\X + \XI)\right)^2\right] \nonumber \\
= &~ \mathbb{E}_{\XI}\left[ \sum_{i=1}^{k}\left(\mathcal{T}^i(\X) + \XI^T \nabla_x \mathcal{T}^i(\X) - \mathcal{S}^i(\X) - \XI^T \nabla_x \mathcal{S}^i(\X)\right)^2\right] \nonumber
\\+ & \mathcal{O}(\sigma^4)\nonumber \\
= & \sum_{i=1}^{k}\left(\mathcal{T}^i(\X) - \mathcal{S}^i(\X)\right)^2  \nonumber
\\+ & \mathbb{E}_{\XI}\left[ \sum_{i=1}^{k}  \left[ \XI^T \left( \nabla_x \mathcal{T}^i(\X) - \nabla_x \mathcal{S}^i(\X)\right) \right]^2\right] + \mathcal{O}(\sigma^4) \label{seclineprop2}
\end{align}

To get equation \ref{seclineprop2}, we use the fact that mean of $\XI$ is zero. To complete the proof, we use the diagonal assumption on the covariance matrix of $\XI$.

\end{proof}

Proofs of other statements are similar. For proof for cross-entropy loss of Proposition 2, use a second order Taylor series expansion of $\log(\cdot)$ in the first step.

\section{Proof for Proposition 3}

\begin{prop_star} 
From the notations in the main text, we have
\begin{align*}
\frac{1}{|\mathcal{D}_l|} \sum_{\X \sim \mathcal{D}_l} \ell (f(\X), g(\X)) &\le  \max_{\X \sim \mathcal{D}_s} \ell (f(\X), g(\X)) \\ &+ \mathrm{K} \mathcal{H}_a(\mathcal{D}_l, \mathcal{D}_s) 
\end{align*}
\end{prop_star}

\begin{proof} Let us denote $\rho(\X) = \ell(f(\X), g(\X))$ for convenience. Assume Lipschitz continuity for $\rho(\X)$ with Lipschitz constant $\mathrm{K}$, and distance metric $\psi_{\X}(\cdot, \cdot)$ in the input space - 
\vspace{-0.5pt}
\begin{align*}
\| \rho(\X_1) - \rho(\X_2) \|_1 \leq \mathrm{K} \psi_{\X}(\X_1, \X_2)  \\
\implies \rho(\X_1) \le \rho(\X_2) + \mathrm{K} \psi_{\X}(\X_1, \X_2)
\end{align*}
Assuming that $\rho(\X_1) \ge \rho(\X_2)$. Note that it holds even otherwise, but is trivial.

Now, for every datapoint $\X_l \in \mathcal{D}_l$, there exists a point $\X_s \in \mathcal{D}_s$ such that $\psi_{\X}(\X_s, \X_l)$ is the smallest among all points in $\mathcal{D}_s$. In other words, we look at the point in $\mathcal{D}_s$ closest to each point $\X_l$. Note that in this process only a subset of points $\mathrm{d}_s \subseteq \mathcal{D}_s$ are chosen, and individual points can be chosen multiple times. For these points, we can write 

\begin{align*}
\rho(\X_l) \leq \rho(\X_s) + \mathrm{K} \psi_{\X}(\X_s, \X_l) &\\
\implies \frac{1}{|\mathcal{D}_l|} \sum_{\X_l \sim \mathcal{D}_l} \rho(\X_l) \leq \frac{1}{|\mathcal{D}_l|} \sum_{\X_s \mathrm{ ~closest~ to~ }\X_l} &\rho(\X_s) \\ + \frac{1}{|\mathcal{D}_l|} \sum_{\X_s \mathrm{ ~closest~ to~ }\X_l} &\mathrm{K} \psi_{\X}(\X_s, \X_l)
\end{align*}

We see that $\frac{1}{|\mathcal{D}_l|} \sum_{\X_s} \rho(\X_s) \leq \max_{\X \sim \mathrm{d}_s} \rho(\X) \leq \max_{\X \sim \mathcal{D}_s} \rho(\X)$, which is a consequence of the fact that the max is greater than any convex combination of elements.  

Also, we have $\psi_{\X}(\X_l, \X_s) \leq \mathcal{H}_a(\mathcal{D}_l, \mathcal{D}_s)$, which is the maximum distance between any two `closest' points from $\mathcal{D}_l$ to $\mathcal{D}_s$ (by definition). 

Applying these bounds, we have the final result.

\end{proof}

\subsection{Proof for Corollary}
\begin{corollary}
For any superset $\mathrm{D'}_s \supseteq \mathcal{D}_s$ of the target dataset, $\mathcal{H}_a(\mathcal{D}_l, \mathcal{D'}_s) \leq \mathcal{H}_a(\mathcal{D}_l, \mathcal{D}_s)$
\end{corollary}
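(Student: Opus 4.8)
The plan is to unfold the definition of the asymmetric Hausdorff quantity and exploit monotonicity of the infimum under enlargement of the index set. Recall that $\mathcal{H}_a(\mathcal{D}_l, \mathcal{D}_s) = \sup_{a \in \mathcal{D}_l} \inf_{b \in \mathcal{D}_s} \psi_{\X}(a,b)$, and likewise with $\mathcal{D}_s$ replaced by $\mathcal{D}'_s$. Note that the outer supremum is taken over the \emph{same} set $\mathcal{D}_l$ in both expressions, so it suffices to compare the inner infima pointwise.

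First I would fix an arbitrary point $a \in \mathcal{D}_l$ and compare $\inf_{b \in \mathcal{D}'_s} \psi_{\X}(a,b)$ with $\inf_{b \in \mathcal{D}_s} \psi_{\X}(a,b)$. Since $\mathcal{D}_s \subseteq \mathcal{D}'_s$, every value $\psi_{\X}(a,b)$ with $b \in \mathcal{D}_s$ is also a value attained over $\mathcal{D}'_s$; an infimum over a larger set can only be smaller or equal, hence $\inf_{b \in \mathcal{D}'_s} \psi_{\X}(a,b) \le \inf_{b \in \mathcal{D}_s} \psi_{\X}(a,b)$. (If the sets are finite, as in the paper's setting, this is just the statement that the minimum distance from $a$ to a superset of candidate points is no larger.)

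Then I would take the supremum over $a \in \mathcal{D}_l$ on both sides of this pointwise inequality: since $f(a) \le g(a)$ for all $a$ implies $\sup_a f(a) \le \sup_a g(a)$, we obtain $\mathcal{H}_a(\mathcal{D}_l, \mathcal{D}'_s) = \sup_{a} \inf_{b \in \mathcal{D}'_s} \psi_{\X}(a,b) \le \sup_{a} \inf_{b \in \mathcal{D}_s} \psi_{\X}(a,b) = \mathcal{H}_a(\mathcal{D}_l, \mathcal{D}_s)$, which is the claim. There is essentially no obstacle here; the only thing to be careful about is that $\psi_{\X}$ need only be a distance metric on the input space (no metric property of the Hausdorff-type quantity itself is used), and that we must not accidentally also enlarge $\mathcal{D}_l$ — the monotonicity goes the "wrong way" for the first argument, but the corollary only enlarges the second argument, so we are fine.
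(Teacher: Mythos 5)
Your proposal is correct and follows essentially the same route as the paper's own proof: for each fixed $a \in \mathcal{D}_l$ the infimum over the enlarged set $\mathcal{D}'_s$ can only decrease, and taking the supremum over $\mathcal{D}_l$ preserves the inequality. The paper phrases this via ``the new closest point $\X'_s$'' and carries along the (irrelevant for this corollary) $\rho$-inequalities, but the mathematical content is identical, and your direct $\sup$--$\inf$ monotonicity argument is if anything cleaner.
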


\begin{proof}
From the previous proof, we have $\rho(\X_l) \leq \rho(\X_s) + \mathrm{K} \psi_{\X}(\X_s, \X_l)$ for an individual point $\X_l$. Now if we have $\mathrm{D'}_s \supseteq \mathcal{D}_s$, then we have $\rho(\X_l) \leq \rho(\X'_s) + \mathrm{K} \psi_{\X}(\X'_s, \X_l)$, where $\X'_s$ is the new point closest to $\X_l$. It is clear that $\psi_{\X}(\X'_s, \X_l) \leq \psi_{\X}(\X_s, \X_l)$ for all $\X_l$. Hence it follows that $\mathcal{H}_a(\mathcal{D}_l, \mathcal{D'}_s) \leq \mathcal{H}_a(\mathcal{D}_l, \mathcal{D}_s)$.

\end{proof}

\section{Experimental details}

\subsection{VGG Network Architectures}
The architecture for our networks follow the VGG design philosophy. Specifically, we have blocks with the following elements:

\begin{itemize}
\item $3 \times 3$ conv kernels with $c$ channels of stride 1
\item Batch Normalization
\item ReLU
\end{itemize}

Whenever we use Max-pooling (M), we use stride $2$ and window size $2$.

The architecture for VGG-9 is - $[ 64 - M - 128- M-  256 - 256 - M - 512 - 512 - M - 512 - 512 - M  ]$. Here, the number stands for the number of convolution channels, and $M$ represents max-pooling. At the end of all the convolutional and max-pooling layers, we have a Global Average Pooling (GAP) layer, after which we have a fully connected layer leading up to the final classes. Similar architecture is used for the case of both CIFAR and MIT Scene experiments.

The architecture for VGG-4 is - $[64 - M - 128 -M - 512 - M ]$.

\subsection{Loss function}
The loss function for distillation experiments use the following form.

\begin{equation*}
\ell(\mathcal{S}, \mathcal{T}) = \alpha \times \mathrm{(CE)} + \beta \times \mathrm{(Match~ Activations)} + \gamma \times \mathrm{(Match~Jacobians)}
\end{equation*}

In our experiments, $\alpha, \beta, \gamma$ are either set to $1$ or $0$. In other words, all regularization constants are $1$.

Here, `CE' refers to cross-entropy with ground truth labels. `Match Activations' refers to squared error term over pre-softmax activations of the form $(y_s - y_t)^2$. `Match Jacobians' refers to the same squared error term, but for Jacobians.

For the MIT Scene experiments, $\alpha, \beta, \gamma$ are either set to $10$ or $0$, depending on the specific method. To compute the Jacobian, we use average pooling over a $feature~size/5$ window with a stride of $1$. We match the Jacobian after the first residual block for resnet, and after the second max-pool for VGG. This corresponds to feature level ``1'' in the ablation experiments.

\subsection{Optimization}
For CIFAR100 experiments, we run optimization for $500$ epochs. We use the Adam optimizer, with an initial learning rate of $1e-3$, and a single learning rate annealing (to $1e-4$) at $400$ epochs. We used a batch size of $128$.

For MIT Scenes, we used SGD with momentum of $0.9$, for $75$ epochs. The initial learning rate is $1e-3$, and it is reduced $10$ times after $40$ and $60$ epochs. We used batch size $8$. This is because the Jacobian computation is very memory intensive.

\checknbdrafts

\end{document}